\def\eqref#1{equation~\ref{#1}}
\def\1{\bm{1}}
\DeclareMathAlphabet{\mathsfit}{\encodingdefault}{\sfdefault}{m}{sl}
\SetMathAlphabet{\mathsfit}{bold}{\encodingdefault}{\sfdefault}{bx}{n}
\theoremstyle{plain}
\newtheorem{theorem}{Theorem}[section]
\newtheorem{corollary}[theorem]{Corollary}
\theoremstyle{definition}
\newtheorem{definition}[theorem]{Definition}
\theoremstyle{remark}
\begin{document}

\twocolumn[
\icmltitle{Adaptive Sampling for Deep Learning via Efficient Nonparametric Proxies}

\begin{icmlauthorlist}
\icmlauthor{Shabnam Daghaghi}{to}
\icmlauthor{Benjamin Coleman}{to}
\icmlauthor{Benito Geordie}{goo}
\icmlauthor{Anshumali Shrivastava}{to,goo}

\end{icmlauthorlist}

\icmlaffiliation{to}{Rice University}
\icmlaffiliation{goo}{ThirdAI Corp}

\icmlcorrespondingauthor{Shabnam Daghaghi}{shabnam.daghaghi@rice.edu}

\icmlkeywords{Machine Learning, ICML}

\vskip 0.3in
]



\printAffiliationsAndNotice{}  

\begin{abstract}
Data sampling is an effective method to improve the training speed of neural networks, with recent results demonstrating that it can even break the neural scaling laws. These results critically rely on high-quality scores to estimate the importance of an input to the network. We observe that there are two dominant strategies: \textit{static} sampling, where the scores are determined before training, and \textit{dynamic} sampling, where the scores can depend on the model weights. Static algorithms are computationally inexpensive but less effective than their dynamic counterparts, which can cause end-to-end slowdown due to their need to explicitly compute losses.
To address this problem, we propose a novel sampling distribution based on nonparametric kernel regression that learns an effective importance score as the neural network trains.
However, nonparametric regression models are too computationally expensive to accelerate end-to-end training. 
Therefore, we develop an efficient sketch-based approximation to the Nadaraya-Watson estimator. 
Using recent techniques from high-dimensional statistics and randomized algorithms, we prove that our Nadaraya-Watson sketch approximates the estimator with exponential convergence guarantees. Our sampling algorithm outperforms the baseline in terms of wall-clock time and accuracy on four datasets.
\end{abstract}

\section{Introduction}
\label{intro}
Data volumes have grown exponentially in recent years, causing deep neural networks (DNNs) to become one of the main components of machine learning and artificial intelligence in a diverse range of settings. Recent advancements in complex DNN architectures have pushed the state of the art beyond what was previously thought possible for applications in natural language processing, recommendation systems, and computer vision. Neural scaling laws predict that increased performance can come from dramatic increases in data size, model size, training cost and other parameters~\cite{alabdulmohsinrevisiting}. 

However, the dramatic increase in data scale has created a computational bottleneck in terms of time, energy, and storage. It is costly to train even a simple model on datasets of the scale typically encountered in scientific and industrial settings. Many applications require dedicated, specialized infrastructure to train and run models. Consider a standard click-through prediction task, where a model must predict whether a user will click on an advertisement. Industry research teams report that such tasks can easily reach the scale of a billion events per day~\cite{mcmahan2013ad}. Training a model on the complete dataset is infeasible without considerable resources and expense.

Data selection is a popular approach to handle this problem.
The idea has been independently studied in many contexts. For example,  active learning seeks to define a selection process where data are selectively labeled~\cite{settles2012active}. Coresets and sketches seek to reduce the scale of the data while preserving important metrics -- such as the loss -- within an $\epsilon$ approximation~\cite{phillips2017coresets}. In statistics, a process known as importance sampling can substantially reduce the sample complexity of estimating an unknown quantity.
A sought-after goal of the optimization literature has been to use importance sampling to accelerate SGD~\cite{zhao2015stochastic}.
Recently,~\citet{sorscher2022beyond} demonstrated that data pruning can break the barrier of the neural scaling laws. Their central observation is that neural network training can be significantly accelerated by a sampling process that ranks training data examples by a high-quality ``pruning metric.''

A variety of pruning metrics have been investigated by the community. We observe that these metrics mainly reduce to approximations of the gradient norm as the importance score. This is unsurprising, given that the optimal SGD sampling distribution is known to be proportional to the gradient norm. However, this introduces a problem: the gradient depends on the model parameters. 
We are presented with two options. We may downsample \textit{statically}, scoring each point independently of the network parameters, or \textit{dynamically}, by scoring points according to metrics derived from the current network state. Dynamic sampling naturally results in better accuracy and better iteration-wise convergence. However, these approaches are prohibitively expensive and can degrade the end-to-end performance. 

We seek a way to sample from the subset of high-gradient points at a given training iteration. Fortunately, the gradient norm correlates strongly with the loss, leading to several related approaches. For example, selective backpropagation computes the loss of every point on the full network, but only performs the gradient computation for points with loss exceeding a threshold~\cite{jiang2019accelerating}. Linear regression models have recently been proposed to predict the loss of each point for use in the sampling process, with excellent results~\cite{ganapathiraman2022impon}.
We view these approaches as extremes on a computation-accuracy tradeoff between our ability to estimate the loss and the end-to-end cost of doing so. In this work, we propose a technique that greatly enhances representation capability while reducing cost when compared with forward propagation through the network.

\textbf{Our Contributions:} 

We make the following concrete contributions. 
\begin{enumerate}
    \item We pose the problem of score estimation as a regression task, where we wish to learn a model that assigns a score to each point in the data.
    \item We develop a novel, sketch-based approximation of the Nadaraya-Watson estimator which we call the Nadaraya-Watson sketch (NWS). This sketch may be of independent interest, as it provably approximates the kernel regression model with $O(Nd)$ training and $O(1)$ inference complexity.
    \item Using the NWS, we develop an importance sampling distribution that predicts the loss of the network. By scheduling updates to the NWS, our distribution adapts to the changing network parameters throughout the dynamics of training.
    \item We demonstrate in experiments that our scheme is adaptive and outperforms the baseline in terms of accuracy and wall-clock time on four datasets.
\end{enumerate}

\section{Background}

\setcounter{section}{1}

To develop our proposal, we combine recent ideas from density estimation and randomized algorithms with classical techniques in nonparametric regression. In this section, we provide a brief exposition of the components of our proposal.

\subsection{Nonparametric Regression}

We consider the classical nonparametric regression setting where we are presented with data $\{\mathbf{x}_1, ... \mathbf{x}_N\}$ and outputs $\{y_1,... y_N\}$ generated according to
$$ y_i = f(\mathbf{x}_i) + \epsilon_i$$
where $\epsilon_1, ... \epsilon_N$ are independent residuals with $\mathbb{E}[\epsilon_i] = 0$. We wish to estimate $f$ from the data, which we can do by computing $\mathbb{E}[\mathbf{y} | \mathbf{x}]$ because $\mathbb{E}[y_i | x_i] = \mathbb{E}[f(x_i)] + \mathbb{E}[\epsilon_i] = f(x_i)$. The conditional probability $p(\mathbf{y}|\mathbf{x})$ can be expressed in terms of the joint and marginal probabilities, as follows. 

$$\mathbb{E}[\mathbf{y}|\mathbf{x}] = \int \mathbf{y} \frac{p(\mathbf{x}, \mathbf{y})}{p(\mathbf{x})} d\mathbf{y}$$

The classical Nadaraya–Watson estimator~\cite{nadaraya1964estimating} is obtained by using kernel density estimation to approximate the distributions $p(\mathbf{x}, \mathbf{y})$ and $p(\mathbf{x})$. Given a kernel $k(\mathbf{x}, \mathbf{y})$, we estimate $f$ using a ratio of weighted kernel sums.
\begin{equation}
\label{eq:NW}
\hat{f}(\mathbf{x}) = \frac{\sum_i y_i k(\mathbf{x}, \mathbf{x_i})}{\sum_i k(\mathbf{x}, \mathbf{x_i})}
\end{equation}

The Nadaraya-Watson estimator is known to be pointwise consistent when $E[Y^2] < \infty$ and the kernel satisfies the properties specified by~\citet{greblicki1984distribution}. Specifically, the kernel $k(x, y)$ must have a bandwidth $h$ such that as $N \to \infty$, $h_N \to 0$ and $Nh^d \to \infty$. Stronger guarantees are possible given further assumptions on the problem. For example, if the kernel (or dataset) have compact support then we can attain uniform consistency~\cite{gyorfi2002distribution}.

\subsection{Locality-Sensitive Hashing}
\label{sec:LSH}
We will estimate the numerator and denominator of the Nadaraya-Watson kernel estimator using recent techniques from randomized algorithms for kernel density estimation. These techniques rely on a particular kind of hash function known as a \textit{locality-sensitive hash} (LSH).

\textbf{LSH Functions:} An LSH family $\mathcal{F}$ is a family of functions $l(\mathbf{x}): \mathbb{R}^{d}\to \mathbb{Z}$ that map similar points to same hash value~\cite{indyk1998approximate}. We say that a collision occurs whenever two points have the same hash code, i.e. $l(\mathbf{x}) = l(\mathbf{y})$.

\begin{definition}
\label{def:lsh}
A hash family $\mathcal{F}$ is locality-sensitive with collision probability $k(\cdot,\cdot)$ if for any two points $x$ and $y$, $l(x) = l(y)$ with probability $k(x, y)$
under a uniform random selection of $l(\cdot)$ from $\mathcal{F}$. 
\end{definition}

\textbf{LSH Kernels:} When the collision probability $k(x, y)$ is a monotone decreasing function of the distance metric $\mathrm{dist}(x, y)$, it is well-known that $k$ is a radial kernel function~\cite{coleman2020race}. We say that a kernel function $k(x, y)$ is an \textit{LSH kernel} if it forms the collision probability for an LSH family (i.e. it satisfies the conditions described by~\citet{chierichetti2012preserving}). A number of well-known LSH families induce useful kernels~\cite{gionis1999similarity}.

\subsection{RACE Sketch}
LSH kernels are interesting because there is a family of efficient algorithms based on histograms with randomized partitions to estimate the quantity
$$g(x) = \sum_{x_i\in\mathcal{D}} k(x_i,x)$$
when $k(x_i,x)$ is a hashable kernel~\cite{lei2021fast,ting2021isolation}. Due to the broad utility of kernel sums in statistical estimation, these algorithms have found application in wide-ranging applications such as WiFi localization~\cite{xu2021efficient}, and genomics~\cite{coleman2022one}. However, they all implement the same core method, which we describe here.

We begin by constructing a sketch $S\in \mathbb{Z}^{R\times W}$, a 2D array of integers. Each row of the sketch is indexed using a hash function that assigns a column (or histogram bucket) to an input. This array is sufficient to report an estimate of $g(x)$ for any query $x$. To construct the sketch, we create $R$ independent hash functions $\{h_1, .. h_R\}$ -- one for each row. For each element $x_i \in D$, we increment the corresponding bucket of the sketch. The approximation of $g(x)$ can be done via averaging over the buckets selected by $\{h_1(x), .. h_R(x)\}$~\cite{luo2018arrays} or by using more complex estimation processes such as median-of-means. With the median-of-means estimator, we have the following guarantee~\cite{coleman2021one}.

\begin{theorem}
\label{thm:race_chernoff}
Let $\hat{g}(x)$ be the median-of-means estimate using the RACE sketch with $R$ rows and let $\tilde{g}(x) = \sum_{x_i \in D} \sqrt{k(x_i, x)}$. Then with probability at least $1 - \delta$, 
$$|\hat{g}(x) - g(x)| \leq \left(32\frac{\tilde{g}^2(x)}{R}\log 1/\delta\right)^{1/2}$$
\end{theorem}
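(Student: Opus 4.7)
The plan is to analyze each row of the RACE sketch as an independent unbiased estimator of $g(x)$, bound its second moment in terms of $\tilde{g}(x)$, and close with a textbook median-of-means argument. First I fix a row $r$ and observe that the row-wise count $\hat{g}_r(x) = \sum_{x_i\in\mathcal{D}} \1[h_r(x_i) = h_r(x)]$ uses hashes drawn independently across rows. Definition~\ref{def:lsh} gives $\Pr[h_r(x_i)=h_r(x)] = k(x_i,x)$, so by linearity of expectation $\E[\hat{g}_r(x)] = g(x)$, and the $R$ row estimates are mutually independent.

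Next I control the second moment by expanding the square. The diagonal contributes $\sum_i k(x_i,x) = g(x)$. For $i\neq j$, the three-way event $\{h_r(x_i)=h_r(x)=h_r(x_j)\}$ is contained in both of the pairwise events $\{h_r(x_i)=h_r(x)\}$ and $\{h_r(x_j)=h_r(x)\}$, so its probability is at most $\min(k(x_i,x),k(x_j,x)) \leq \sqrt{k(x_i,x)\,k(x_j,x)}$, using $\min(a,b)\leq\sqrt{ab}$ for $a,b\geq 0$. Combining,
\[
\E[\hat{g}_r(x)^2] \;\leq\; \sum_i k(x_i,x) + \sum_{i\neq j}\sqrt{k(x_i,x)\,k(x_j,x)} \;=\; \Bigl(\sum_i \sqrt{k(x_i,x)}\Bigr)^{2} \;=\; \tilde{g}^2(x),
\]
so $\Var[\hat{g}_r(x)] \leq \tilde{g}^2(x)$.

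Finally I close with the standard median-of-means device: partition the $R$ rows into $k = \lceil 8\log(1/\delta)\rceil$ blocks, take the sample mean of the row estimates within each block, and return the median of the $k$ block means. Each block mean is an independent unbiased estimator of $g(x)$ with variance at most $k\,\tilde{g}^2(x)/R$, so Chebyshev's inequality shows that any given block deviates from $g(x)$ by more than $t = 2\sqrt{k\,\tilde{g}^2(x)/R}$ with probability at most $1/4$. A Hoeffding bound on the binomial count of such "bad" blocks then shows the median deviates by more than $t$ with probability at most $e^{-k/8} \leq \delta$; substituting $k$ yields the stated bound $|\hat{g}(x) - g(x)| \leq (32\,\tilde{g}^2(x)\log(1/\delta)/R)^{1/2}$. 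The main obstacle I anticipate is the off-diagonal step: without the elementary inequality $\min(a,b)\leq\sqrt{ab}$, the analysis only yields $\Var[\hat{g}_r(x)] \leq g(x)^2$, which is strictly weaker than the target $\tilde{g}^2(x)$; aligning the numerical constants to land exactly on $32$ also requires careful bookkeeping in the Chebyshev/Hoeffding budget.
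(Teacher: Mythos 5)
Your proof is correct, and it reconstructs the standard RACE median-of-means argument that the paper simply imports from \citet{coleman2021one} without proof: unbiasedness of each row count, the second-moment bound $\E[\hat{g}_r^2]\le\bigl(\sum_i\sqrt{k(x_i,x)}\bigr)^2=\tilde{g}^2(x)$ via the three-way collision probability being at most $\min(k_i,k_j)\le\sqrt{k_ik_j}$, and Chebyshev plus a Chernoff--Hoeffding count of bad blocks. The constants check out exactly ($t=2\sqrt{k\tilde{g}^2/R}$ with $k=8\log(1/\delta)$ gives the stated $32$), so there is nothing to fix.
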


\section{Algorithm}

Algorithm~\ref{alg:alg_construct_nws} implements the Nadaraya-Watson estimator via a composition of sketches. We refer to the result as the \textit{Nadaraya-Watson sketch} (NWS). We begin by describing the design of the NWS and prove error bounds on the approximation error. Then, we proceed to describe how to use the sketch as a subroutine of our importance sampling process to accelerate the training of deep learning models.

\begin{algorithm}[!htb]
\caption{Construct NWS}
\begin{algorithmic}[1]
\INPUT Dataset $D = \{(x_i,y_i)\}$, LSH family $\mathcal{F}$, sketch parameters $R$ and $W$
\OUTPUT Sketch $S \in \mathbb{Z}^{R \times W \times 2}$
\STATE Initialize $S_t,S_b \in \mathbb{Z}^{R \times W} = \mathbf{0}$
\STATE Construct $R$ hash functions $H = \{h_1, ... h_R\} \sim \mathcal{F}$
\FOR {$(x_i, y_i) \in D$}
\FOR {$h_r \in H$}
\STATE Increment $S_t[r, h_r(x_i)]$ by $y_i$
\STATE Increment $S_b[r, h_r(x_i)]$ by 1
\ENDFOR
\ENDFOR
\STATE {\textbf{return} $S = [S_t, S_b]$}
\label{alg:alg_construct_nws}
\end{algorithmic}
\end{algorithm}

\subsection{Theory}
In this section, we prove that Algorithm~\ref{alg:alg_construct_nws} produces a sketch that can estimate Equation~\ref{eq:NW} with exponentially-bounded error. Observe that Algorithm~\ref{alg:alg_construct_nws} produces two sketches using the same hash functions. The expected value of the \textit{top} sketch $S_t$ is the numerator of the Nadayara-Watson estimator, while the expected value of the \textit{bottom} sketch $S_b$ is the denominator. We will consider bounds on the ratio $S_t(x) / S_b(x)$.

There are a few subtle design decisions involved with this estimator. First, there are two ways to compute the ratio. One method is to apply the median-of-means process to $S_t$ and $S_b$ independently, and then divide the results. The other way is to perform these steps in reverse order by dividing each row of $S_t$ and $S_b$ and applying median-of-means to the resulting $R$ ratios. We choose to implement the first method because the second one introduces a non-trivial bias term in estimating Equation~\ref{eq:NW}. Second, division by zero can occur whenever $S_b = 0$. However, we observe that when $S_b = 0$, $S_t$ is also $0$ allowing us to correctly return $0$ in this case. Therefore, we exclude this case and consider all expectations in the following analysis to be conditioned on the event that $S_b > 0$ (we omit the notation for the sake of readability). We also suppose that $y$ is bounded. This assumption is standard in the literature and necessary to have bounded variance; see Theorem 3 of~\citet{coleman2020sub}.

\begin{theorem}
\label{thm:nadaraya_watson_sketch}
Let $S_t(x)$ and $S_t(x)$ be the median-of-means estimates over the sketches in Algorithm~\ref{alg:alg_construct_nws} and let $\hat{f}(x)$ be the Nadaraya-Watson estimator. Assuming that $y \in [-B, B]$, we have the following guarantee.

$$ \mathrm{Pr}\left[\left|\frac{S_t}{S_b} - \hat{f}(x)\right| \leq \epsilon\right]\geq 1 - e^{-R \epsilon^2 / 32 B^2 (B + 1 + \epsilon)^2}$$

\end{theorem}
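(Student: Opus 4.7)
The plan is to reduce control of the ratio $S_t/S_b$ to separate concentration statements for the numerator and denominator sketches, then recombine via a standard algebraic identity. Write $N^* = \sum_i y_i k(x,x_i)$ and $D^* = \sum_i k(x,x_i)$, so that $\hat{f}(x) = N^*/D^*$, and note that for each row $r$ of the sketch the single-row readouts $N_r = S_t[r,h_r(x)]$ and $D_r = S_b[r,h_r(x)]$ satisfy $\mathbb{E}[N_r] = N^*$ and $\mathbb{E}[D_r] = D^*$. The key identity
\begin{equation*}
\frac{S_t}{S_b} - \hat{f}(x) \;=\; \frac{D^*(S_t - N^*) \,-\, N^*(S_b - D^*)}{S_b\, D^*},
\end{equation*}
combined with the bound $|N^*/D^*| \leq B$ (an immediate consequence of $|y_i|\leq B$ and $k\geq 0$), reduces everything to simultaneously controlling $|S_t - N^*|$ and $|S_b - D^*|$.

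Next I would invoke Theorem~\ref{thm:race_chernoff} in both directions. The denominator sketch $S_b$ is exactly the RACE estimator for $g(x) = D^*$, so Theorem~\ref{thm:race_chernoff} directly yields $\mathrm{Pr}[|S_b - D^*| > \epsilon_b] \leq \exp(-R\epsilon_b^2/(32\tilde{g}^2))$. The numerator sketch $S_t$ is a $y_i$-weighted version of the same construction; because $|y_i|\leq B$, the same proof (with extra bookkeeping for the weights) extends to $\mathrm{Pr}[|S_t - N^*| > \epsilon_t] \leq \exp(-R\epsilon_t^2/(32 B^2 \tilde{g}^2))$. This weighted extension of Theorem~\ref{thm:race_chernoff} is the one technical lemma I would state and verify along the way.

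To combine the two I would take a union bound and apply the identity above. On the intersection of the two good events, the decomposition gives $|S_t/S_b - \hat{f}(x)| \leq (\epsilon_t + B\epsilon_b)/(D^* - \epsilon_b)$, so it suffices to arrange $\epsilon_t + (B+\epsilon)\epsilon_b \leq \epsilon D^*$. Splitting the budget as $\epsilon_t = \epsilon D^*/2$ and $\epsilon_b = \epsilon D^*/(2(B+\epsilon))$ equalises the two exponential tails. To dispose of the apparent $\tilde{g}^2/D^{*2}$ factor in the exponent, I would normalise the sketches by $N$ so that the variance proxy $\tilde{g}/N = (1/N)\sum_i \sqrt{k(x,x_i)}$ is bounded by $1$ (since $\sqrt{k}\leq 1$); the ratio $S_t/S_b$ is invariant under this rescaling, so the resulting exponent depends only on $\epsilon$ and $B$.

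The main obstacle is matching the exponent exactly to $R\epsilon^2/(32 B^2(B+1+\epsilon)^2)$. The $(B+\epsilon)$ factor plainly comes from the balancing constraint $\epsilon_t + (B+\epsilon)\epsilon_b \leq \epsilon D^*$, and the $B^2$ from the weighted concentration bound on $S_t$. The extra additive $1$ inside $(B+1+\epsilon)$ most plausibly traces back to a constant absorbed from the normalisation $\tilde{g}/N \leq 1$ or from a Cauchy--Schwarz step in the weighted analogue of Theorem~\ref{thm:race_chernoff} (where an additional $\sqrt{D^*}$ appears and must be paid for). Locating precisely where that $+1$ enters, and verifying that the union bound does not cost an extra factor of $2$ in the exponent, is where I would expect to spend the bulk of the careful bookkeeping.
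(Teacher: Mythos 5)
Your overall strategy---concentrate the numerator and denominator sketches separately, then push the two deviations through the ratio---is the same as the paper's; your algebraic identity is just an expanded form of the paper's sandwich bound $\tfrac{g_t-\epsilon}{g_b+\epsilon}\le S_t/S_b\le \tfrac{g_t+\epsilon}{g_b-\epsilon}$. The real divergence is in how the two deviation events are combined. You take a union bound and prove a separate weighted concentration lemma for $S_t$; the paper instead exploits the fact that $S_t$ and $S_b$ are read from the \emph{same} hash collisions and asserts the deterministic coupling $|S_t-g_t|\le B\,|S_b-g_b|$, so that a single concentration event on $S_b$ at scale $\epsilon/B$ controls both deviations. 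That coupling is where the $B^2$ enters the exponent with no union-bound loss (your route puts a factor of $2$ in front of the exponential, not inside it, so it is harmless but does not reproduce the stated constant exactly). The $+1$ you were hunting for comes from the ratio-perturbation step, not from any Cauchy--Schwarz or normalization constant: the paper bounds $\bigl|\tfrac{g_t\pm\epsilon}{g_b\mp\epsilon}-\tfrac{g_t}{g_b}\bigr|\le \epsilon\,\tfrac{g_t+g_b}{g_b(g_b\pm\epsilon)}\le \epsilon\,\tfrac{B+1}{g_b+\epsilon}$ using $g_t+g_b\le (B+1)g_b$, and then inverts $\epsilon'=\epsilon(B+1)/(g_b+\epsilon)$ so that the $g_b^2$ in the exponent cancels.

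That inversion, not your normalization by $N$, is what removes the data-dependent factor, and this is the one step of your plan that does not go through as written: after dividing by $N$ you still carry $(D^*/\tilde{g})^2$ in the exponent, and since $D^*=\sum_i k\le \sum_i\sqrt{k}=\tilde{g}$ this ratio can be arbitrarily small, so bounding $\tilde{g}/N\le 1$ does not yield an exponent depending only on $\epsilon$ and $B$. You would need to reparametrize the target accuracy in units of $D^*$ (as the paper does) rather than rescale the sketch. For what it is worth, the paper's own proof is not airtight here either: it silently writes $g_t^2$ and $g_b^2$ where Theorem~\ref{thm:race_chernoff} gives $\tilde{g}^2$, its coupling claim ignores sign cancellations among the per-point deviations $y_i(\delta_i-k_i)$ versus $(\delta_i-k_i)$, and its final exponent has $(B+1-\epsilon')$ where the theorem statement has $(B+1+\epsilon)$. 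Your union-bound route with an explicit weighted lemma is the more defensible skeleton, but you should adopt the paper's $\epsilon\mapsto\epsilon'$ substitution to close the argument.
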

\begin{proof}
Let $g_t(x)$ be the numerator and $g_b(x)$ be the denominator of Equation~\ref{eq:NW}. With $R$ columns, we have the following two guarantees:
$$\mathrm{Pr}[|S_t(x) - g_t(x)| > \epsilon] \leq e^{-R\epsilon^2/32g^2_t(x)}$$
$$\mathrm{Pr}[|S_b(x) - g_b(x)| > \epsilon] \leq e^{-R\epsilon^2 / 32g^2_b(x)}$$

We make two observations. First, note that $S_t(x)$ and $S_b(x)$ can be expressed as the inner products $\langle \mathbf{y}, \mathbbm{1}(x)\rangle$ and $\langle \mathbf{1}, \mathbbm{1}(x)\rangle$, where $\mathbf{y} = [y_1, ... y_N]$ and
$$\mathbbm{1}(x) = \sum_{r = 1}^R [\mathbbm{1}_{\{h_r(x_1) == x\}}, ... \mathbbm{1}_{\{h_r(x_N) == x\}}]$$

Because $S_t(x)$ and $S_b(x)$ are both functions of the same underlying random variable, we do not need to bound the probability for both events. In particular, if $|S_b(x) - g_b(x)| < \epsilon$ and $y_i \in [-B, B]$, then $|S_t(x) - g_t(x)| \leq B|S_b(x) - g_b(x)| < B\epsilon$. Therefore, if we satisfy $|S_b(x) - g_b(x)| < B^{-1}\epsilon$, we will have both $|S_b(x) - g_b(x)| < \epsilon $ and $|S_b(x) - g_b(x)| < \epsilon$. This leads to the following inequality, where we omit the dependence on $x$ for the sake of clarity.

$$-\epsilon<S_t-g_t<\epsilon => -\epsilon+g_t<S_t<\epsilon+g_t$$
$$-\epsilon<S_b-g_b<\epsilon => -\epsilon+g_b<S_b<\epsilon+g_b$$

$$\mathrm{Pr}\left[\frac{g_t - \epsilon}{g_b + \epsilon} \leq \frac{S_t}{S_b} \leq \frac{g_t + \epsilon}{g_b - \epsilon}\right] \leq 1 - e^{-R\epsilon^2/32B^2 g^2_b}$$

To obtain the final inequality, we observe that


$$ \frac{g_t - \epsilon}{g_b + \epsilon} = \frac{g_t}{g_b}  - \epsilon \frac{g_t + g_b}{g^2_b+ \epsilon g_b} \geq \frac{g_t}{g_b} - \epsilon \frac{B+1}{g_b + \epsilon}$$
$$ \frac{g_t + \epsilon}{g_b - \epsilon} = \frac{g_t}{g_b}  + \epsilon \frac{g_t + g_b}{g^2_b- \epsilon g_b} \leq \frac{g_t}{g_b} + \epsilon \frac{B+1}{g_b + \epsilon}$$
where the inequalities follow from $|g_t(x)| \leq B g_b(x)$. 
This leads to
$$ \mathrm{Pr}\left[\left|\frac{S_t}{S_b} - \frac{g_t}{g_b}\right| \leq \epsilon'\right]\geq 1 - e^{-R \epsilon^2 / 32 B^2 g_b^2}, \hspace{2mm}\epsilon'=\epsilon \frac{B+1}{g_b + \epsilon}$$
Replacing $\epsilon = \frac{\epsilon' g_b}{B+1-\epsilon'}$ results in
$$ \mathrm{Pr}\left[\left|\frac{S_t}{S_b} - \hat{f}(x)\right| \leq \epsilon'\right]\geq 1 - e^{-R \epsilon'^2 / 32 B^2 (B + 1 - \epsilon')^2}$$
\end{proof}
Theorem~\ref{thm:nadaraya_watson_sketch} can be used to design a sketch for a given error $\epsilon$ and failure rate $\delta$. Corollary~\ref{cor:design_sketch} demonstrates how to set the parameters to have additive pointwise error with high probability.

\begin{corollary}\label{cor:design_sketch}
The Nadaraya-Watson sketch must have $R = O\left(\frac{B^4}{\epsilon^2}\right) $ rows to have additive error $\epsilon$.
\end{corollary}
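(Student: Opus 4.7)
The plan is a direct inversion of the tail bound supplied by Theorem~\ref{thm:nadaraya_watson_sketch}. To guarantee that $|S_t/S_b - \hat{f}(x)| \le \epsilon$ holds with probability at least $1 - \delta$ for some fixed confidence $\delta \in (0,1)$, I would demand
\[
e^{-R\epsilon^2 / \bigl(32 B^2 (B+1+\epsilon)^2\bigr)} \le \delta
\]
and solve for the number of rows, obtaining
\[
R \;\ge\; \frac{32\, B^2 (B+1+\epsilon)^2}{\epsilon^2}\,\log(1/\delta).
\]
That is the entire substantive step; everything after is bookkeeping of the $B$-dependence inside the constant.

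Next I would simplify the polynomial-in-$B$ factor. Since $y_i \in [-B,B]$ forces $\hat{f}(x) \in [-B,B]$, any target additive error $\epsilon > 2B$ is trivially achievable, so one may assume $\epsilon \le 2B$ without loss of generality. In this regime, and under the mild normalization $B \ge 1$, we have $(B+1+\epsilon)^2 = O(B^2)$. Substituting back gives
\[
R \;=\; O\!\left(\frac{B^2 \cdot B^2}{\epsilon^2}\,\log(1/\delta)\right) \;=\; O\!\left(\frac{B^4}{\epsilon^2}\right),
\]
where the last equality folds $\log(1/\delta)$ into the $O(\cdot)$ for a fixed confidence level, which matches the stated corollary.

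I do not expect any genuine obstacle here: the corollary is essentially a worked example showing how to read off sketch-design parameters from the previous theorem. The only subtlety worth flagging is that the $B^4$ scaling is pessimistic — it treats the additive $1$ and $\epsilon$ inside $(B+1+\epsilon)^2$ as constants comparable to $B$. A sharper statement could keep $(B+1)^2$ explicit (or even just $B^2$ when $B \gg 1$), but these refinements do not affect the asymptotic claim and are not needed to deliver the desired bound.
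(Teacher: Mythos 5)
Your proposal is correct and follows essentially the same route as the paper: invert the tail bound of Theorem~\ref{thm:nadaraya_watson_sketch} to get $R \geq \frac{32 B^2 (B+1+\epsilon)^2}{\epsilon^2}\log(1/\delta)$, then absorb the $(B+1+\epsilon)^2$ factor into $O(B^2)$. The only cosmetic difference is the normalization used for that last step --- the paper assumes $\epsilon < 1$ to write $(B+2)^2$, while you assume $\epsilon \le 2B$ and $B \ge 1$ --- which does not change the argument.
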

\begin{proof}
We require the condition in Theorem~\ref{thm:nadaraya_watson_sketch} to hold with probability $\geq 1 - \delta $. Therefore 
$$ \delta \leq e^{- R \epsilon^2 / 32 B^2 (B + 1 + \epsilon)^2}$$
This implies the following inequalities.
$$ R \epsilon^2 / 32 B^2 (B + 1 + \epsilon)^2 \geq \log 1 / \delta$$
$$ R \geq \frac{32 B^2 (B + 1 + \epsilon)^2}{\epsilon^2} \log 1 / \delta$$
$$ R \geq \frac{32 B^2 (B + 2)^2}{\epsilon^2} \log 1 / \delta$$
where the final inequalities holds under the assumption that $\epsilon < 1$.
\end{proof}

 \subsection{Validation Study}
In this section, our aim is to determine the extent to which the NWS approximates the output of the Nadaraya-Watson kernel regression model. We also demonstrate that the NWS is a reasonable model for regression and classification tasks.
 
\subsubsection{Empirical and Theoretical Error}

Theorem \ref{thm:nadaraya_watson_sketch} suggests that $|\epsilon|\leq O(\frac{1}{\sqrt{R}})$ with high probability. In particular: 
$$\epsilon^2 = \frac{32B^2(B+1-\epsilon)\log\frac{1}{\delta}}{R}\le \frac{32B^2(B+1)\log\frac{1}{\delta}}{R} $$
therefore, we have the following error bound with probability $1 - \delta$: 
$$|\epsilon| \leq B\sqrt{\frac{32 \log{\frac{1}{\delta}}(B+1)}{R}}=O(\frac{1}{\sqrt{R}})$$
To empirically validate this upper bound, we conducted an error study with the Microsoft Research Paraphrase Corpus (MRPC) dataset~\cite{dolan-brockett-2005-automatically}. For a full description of the dataset, see the \textit{Experiments} section. We calculated the ground-truth values of the Nadaraya-Watson kernel model using the training data and computed the error for each sample of the test data. We use the SRP LSH kernel with 10 bits, and we vary the sketch size $R$ to see whether the error obeys our bound. Figure \ref{error_validation} shows the $99\%$ percentile of the empirical error at each value of $R$ (right) and the full distribution of errors (left). These results show that our sketch has the correct asymptotic behavior predicted by our theoretical
results and is bounded by $\frac{1}{\sqrt{R}}$.
\begin{figure}[H]
\begin{center}
\begin{multicols}{2}    \includegraphics[width=1.0\linewidth]{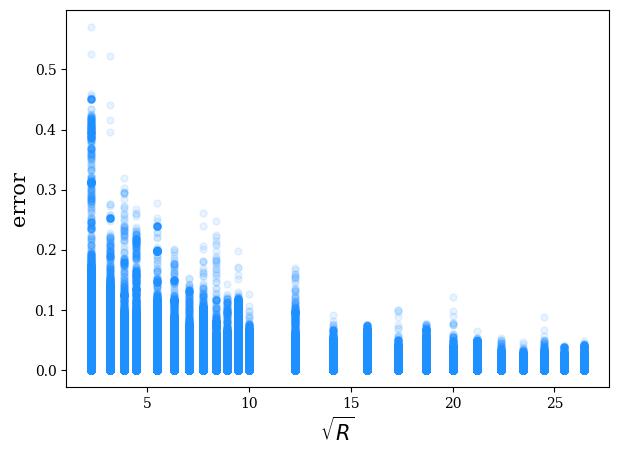}\par 
      \label{}
         \includegraphics[width=1.0\linewidth]{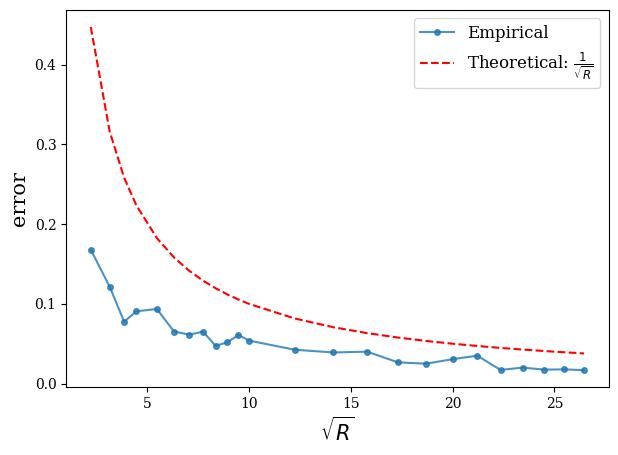}\par 
        \label{} 

    \end{multicols}
\end{center}
\vspace{-0.2in}
\caption{ \textbf{left}: The distribution of empirical error for test dataset for multiple sketches with different values of $R$. \textbf{right:} The blue curve is the $99\%$ percentile of empirical error and the red curve is the theoretical error bound.}
\label{error_validation}
\end{figure}

\subsubsection{NWS for Regression Task}
To demonstrate that the NWS sketch is a useful model, we apply NWS to standard regression datasets. Table \ref{table:nws_lr} shows the comparison of NWS with linear regression on three of the UCI regression datasets, respectively. Note that the performance of the NWS improves as we increase the sketch size $R$, further confirming our theoretical analysis.  

\begin{table}[ht!]
\begin{center}
  \caption{Mean squared error of NWS and linear regression (LR) on UCI regression datasets.} 

\footnotesize

  \begin{tabular}
{p{1.cm}|p{1.cm}|p{.02cm}p{.6cm}p{.6cm}p{.6cm}p{.6cm}p{.5cm}}
    \toprule
   \hline
        \multicolumn{1}{c}{}
      &\multicolumn{1}{c}{LR}
    &\multicolumn{6}{c}{NWS}
    
    \\
    
    Dataset  
    &  &R  & 10 &20   &50  &100  &200    
\\
  
    \\
    \hline 
   airfoli   &15574.41 &   &2251.4   &259.9   & 27.91 &27.76 &27.6  
\\
    \hline
     gas   & 222.97 &   &33.29   &23.36  &18.82  & 18.09  &17.79 
    %
    \\
           \hline
     energy   &9.687 &   &3.015   &1.374    &0.305 &0.0878 &0.078 
    \\
           \hline  
    \bottomrule

\end{tabular}
   \label{table:nws_lr}
\end{center}

\end{table}


\subsection{Adaptive Sampling via the Sketch}

Our validation study demonstrates that the NWS is a reasonable and efficient learning algorithm. In this section, we use the NWS as an online algorithm to predict the importance of an example to the model training process. This is done by fitting the NWS to the sequence of losses observed during training. Because the NWS is a non-linear, non-parametric model, it is able to model the nonconvex loss landscape of the model under training. Our proposed method is a dynamic sampling scheme since it uses the model parameters to estimate the loss, yet it is computationally efficient ($O(1)$) and independent of the number of data points. The proposed method consists of three main steps as shown in Figure \ref{fig:Workflow}. 

\begin{figure*}[ht]
\begin{center}
\centerline{\includegraphics[width=6in]{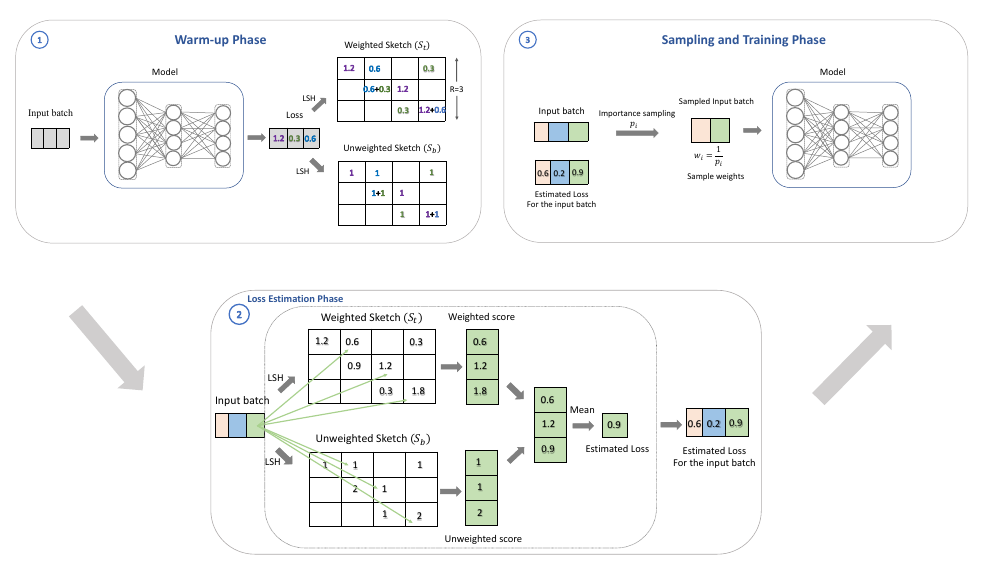}}
\caption{Schematic diagram of our proposal 1) (\textit{Warm-up} phase) For the first few iterations, we add data to the NWS sketch. \textit{Weighted} array stores the loss values, and \textit{unweighted} array stores the number of data. 2) After the \textit{warm-up} phase, we query NWS with the data. The \textit{weighted} score divided by \textit{unweighted} score estimates the loss value for the data point, without any need to explicitly compute the loss with the network. 3) Sampling phase: Based on the estimated loss, we keep the points with higher loss values and reject ones with lower loss values with a higher probability. For more details see Algorithm section.}
\label{fig:Workflow}
\end{center}
\vspace{-5mm}
\end{figure*}

\textbf{Warm-up phase}:  
To initialize the NWS array, we do not down-sample data for the first few iterations.  As Figure \ref{fig:Workflow} represents, in the \textit{warm-up} step, we pass the first few batches of data through the network, compute their loss, and add their loss values to $S_t$ sketch in the numerator of the NWS. The $S_b$ sketch in the denominator of NWS also stores the number of data points. From now on, we call the $S_t$ and $S_b$ sketches, the \textit{weighted} (as it stores the loss values) and \textit{unweighted} sketches, respectively.


\textbf{Loss Estimation phase}: After warm-up phase, we query the NWS with the incoming data batch to retrieve their weighted and unweighted scores. The estimated loss value for each data point is its weighted score divided by the unweighted score. In other words, we are estimating loss via kernel density estimation. 

\textbf{Sampling phase}: We wish to keep samples with higher loss values and discard the ones with lower loss values, since it implies that the network has seen similar data instances. Therefore, we apply importance sampling on the estimated loss values to sample each data point with accepted probability of $p_i$, thus the associated weight of each \emph{accepted} sample is $w_i$ to debias the loss.

\section{Sampling Experiments}
\label{expriment}
In this section, we empirically benchmark the performance of our proposed algorithm against the baseline. The baseline is the conventional training without subsampling, and our proposed algorithm computes the kernel density estimation of loss distribution via NWS and dynamically estimates the loss values for data points. Our algorithm is dynamic and adaptive to the constant change of loss landscape, yet computationally efficient.
We evaluate our framework and the baseline on four datasets with two tasks.

\textbf{Datasets}: MRPC dataset \cite{dolan-brockett-2005-automatically} is an entailment task dataset which consists of a corpus of sentence pairs collected from a news article and each pair is labeled positive if they are paraphrase. Twitter-financial-news and Financial-phrasebank \cite{Malo2014GoodDO} are financial sentiment analysis task datasets. For Financial-phrasebank each sentence is classified from an investor point of view, e.g. how the news may impact the stock price, and for Twitter dataset the finance-related tweets are classified based on their sentiment. Sentinemt140 dataset is also a sentiment analysis task dataset that classifies sentiment of general tweets. 
The statistics of the datasets are shown in Table \ref{table:data}.
\begin{table}[H]
    \centering
\small
    \caption{Statistics of the datasets}

    \begin{tabular}{c|c|c} \hline
    	Dataset  
  & \#Train & \#Test \\ \hline
    	MRPC   & 3669  & 409\\ \hline
            Financial-phrasebank & 4356 & 484 \\ \hline
            Twitter-financial-news & 8944 & 993\\ \hline
    	Sentiment140 & 1.44M  & 1.6M\\ 
    	\hline
    	\bottomrule
    	\end{tabular}
    	\label{table:data}
    	\vspace{-2mm}
\end{table}

 \textbf{Architecture and Hyperparameters}: 
For Sentiment140 dataset we utilize pre-trained Distilled-Bert model \cite{Sanh2019DistilBERTAD} and for the rest of the datasets we utilize the pre-trained Bert model \cite{devlin2018bert}, and add a classifier head to adapt the model to the classification task. We fine-tune the model on each dataset by retraining the whole model. 
The optimizer is Adam with a learning rate of $0.00002$ for all datasets.
To use the hash function we need a vector representation of the data. Therefore, we use the representation of each data in the output of BertPooler layer. 

We use  sign random projection (SRP) hash function with number of repetitions $R=200$ for all datasets. The number of warm-up iterations for MRPC and Financial-phrasebank datasets is 50, and for Sentiment140 and Twitter dataset is 100. We update the NWS sketch with an initial update period of every iteration and then exponentially decay the updating frequency (as we need fewer updates near convergence). Our experiments are run on a NVIDIA V100 GPU with 32 GB memory.

\subsection{Algorithm and Implementation Details}
\label{algo}
We consider NWS sketch which consists of two arrays, one weighted array, and the other unweighted array. The weighted array stores the loss values associated with each sample, while the unweighted array stores the number of points that are mapped to a bucket.


First, we initialize $R$ independent LSH hash functions, where 
$R$ is the number of repetitions in each array. For the sketch to obtain a general idea of the loss landscape, we use the first few iterations to add data to the NWS sketch, with no sampling. We call it the warm-up phase. 
After the warm-up phase, we query both sketches with the incoming batch of data, and compute scores for both arrays (weighted scores and unweighted scores). The final score of each data point is computed as $\frac{\text{weighted score}}{\text{unweighted score}}$, which is equivalent to its estimated loss value.
After calculating the estimated loss for each data point in the batch, we apply importance sampling such that data points with higher estimated loss values are sampled with higher probability. 

We feed the model only the accepted samples, thus the model is trained only on the sampled data points. Then, the true loss values of the sampled data points are calculated and added back to the sketch to update the values.




For more details please refer to Algorithms \ref{alg:alg_main2}, \ref{alg:alg_train}, \ref{alg: alg_loss}, \ref{alg:alg_subsample}.




\begin{algorithm}[!htb]
\caption{Proposed Algorithm}
\begin{algorithmic}[1]
\INPUT Dataset $\mathcal{D}$, Number of warm-up iterations $Iter_{warm}$, \emph{NWS} sketch
\FOR {$Iter$ in Iterations}
\STATE $(x,y) = $ Batch of data $\mathcal{D}$
\IF{$Iter \leq Iter_{warm}$}
\STATE $loss$ = TrainModel($x,y,\_$) (Algorithm \ref{alg:alg_train})
\STATE UpdateSketch($x,y,loss$) {(Algorithm \ref{alg: alg_update})}
\ELSE
\STATE $\hat{loss}$ = LossEstimation(($x,y$), \emph{NWS})
(Algorithm \ref{alg: alg_loss})
\STATE $weight$ = Sampling($x,y,\hat{loss}$) (Algorithm \ref{alg:alg_subsample})
\STATE $loss$ = TrainModel($x,y,weight$) (Algorithm \ref{alg:alg_train})
\STATE UpdateSketch($x,y,loss$) 
\ENDIF
\ENDFOR
\label{alg:alg_main2}
\end{algorithmic}
\end{algorithm}

\begin{algorithm} [!htb]
\caption{TrainModel}
\begin{algorithmic}[1]
\INPUT Batch of data $\mathcal{D} = \{(x,y,weight)\}$
\OUTPUT Loss for samples of each batch $loss$
\STATE If $weight$ is not given: $weight = 1$
\STATE forward run
\STATE Cross Entropy loss for each sample $x_i$: \\$loss_i = CE(x_i,y_i)$ 
\STATE $loss_i = weight_i \cdot loss_i$ 
\STATE backpropagation
\STATE {\textbf{return} $\{loss_i\}$}
\label{alg:alg_train}
\end{algorithmic}
\end{algorithm}

\begin{algorithm}[!htb]
\caption{LossEstimation}
\begin{algorithmic}[1]
\INPUT query $q$ , NWS 

\OUTPUT $S$ set of scores (loss estimation values)
\STATE $S_t$ and $S_b$ are NWS arrays 
\STATE NWS has $\mathcal{R}$ LSH functions $h_{r}$
\STATE{$\text{score}_{\text{weighted}} = \text{Query}(q, S_t, h_{r}|_{k=1}^{k=\mathcal{R}})$ {(Algorithm \ref{alg: alg_query})}}
\STATE{$\text{score}_{\text{unweighted}} = \text{Query}(q, S_b, h_{r}|_{k=1}^{k=\mathcal{R}})$}

\STATE {$\text{S} = \dfrac{\text{score}_{weighted}}{\text{score}_{unweighted}}$}


\STATE {\textbf{return} $S$}

\label{alg: alg_loss}
\end{algorithmic}
\end{algorithm}

\begin{algorithm}[!htb]
\caption{Query}
\begin{algorithmic}[1]

\STATE {\textbf{Input:} Query $q$, sketch $S$, $h_{r}$ as $R$ LSH hash functions} 
\STATE {\textbf{Output:} $score$}
\STATE {Compute query hash codes $h_{r}(q)|_{k=1}^{k=R}$}, map them to buckets $b_r|_{r=1}^{r=R}$ and retrieve the bucket values $x_{b_r}$


\STATE {$score = E[x_{b_r}|_{i=1}^{i=R}]$} \algorithmiccomment{\emph{compute average over the retrieved values}}

\STATE {\textbf{return} $score$}

\label{alg: alg_query}
\end{algorithmic}
\end{algorithm}

\begin{algorithm}[!htb]
\caption{UpdateSketch}
\begin{algorithmic}[1] 

\STATE {\textbf{Input:} Data $\mathcal{D}=(x,y)$, Value $v$, Sketch $S$}
\STATE sketch has $R$ hash functions $H = \{h_1, ... h_R\}$

\FOR {$(x_i, y_i) \in \mathcal{D}$}
\FOR {$h_r \in H$}
\STATE Increment $S[r, h_r(x_i)]$ by $v$
\ENDFOR
\ENDFOR
\label{alg: alg_update}
\end{algorithmic}
\end{algorithm}



\begin{algorithm} 
\caption{Sampling}
\begin{algorithmic}[1]
\INPUT Dataset $\mathcal{D} = \{(x,y)\}$, Estimated loss of each sample $\hat{loss}$
\OUTPUT Sample weight $weight$
\STATE $p_i =$ Accpeted probalilty of sample $i$ via importance sampling over $\hat{loss}$ values 

\IF{$x_i$ is accepted}
\STATE $weight_i = \frac{1}{p_i}$ 
\ELSE 
\STATE $weight_i = 0$
\ENDIF
\STATE {\textbf{return} $\{weight_i\}$}
\label{alg:alg_subsample}
\end{algorithmic}
\end{algorithm}




\subsection{Results}
Table \ref{table:main_table} shows the comparisons in terms of \emph{accuracy} and \emph{convergence time}(wall-clock time to reach baseline accuracy). According to this table, our algorithm meets baseline accuracy faster in terms of wall-clock time (lower convergence time), and eventually reaches higher accuracy level than the baseline for all datasets. 

Figure \ref{fig:main_plot} shows the plots comparing $accuracy$ and $loss$ versus the number of iterations for our method and the baseline. Note that for the first few iterations, the loss and accuracy values are the same for our method and the baseline, this is due to the warm-up phase where we do not subsample and we only update the sketch. 
\begin{figure*}[h!]
\begin{center}
\begin{multicols}{4}
    \includegraphics[width=1.\linewidth]{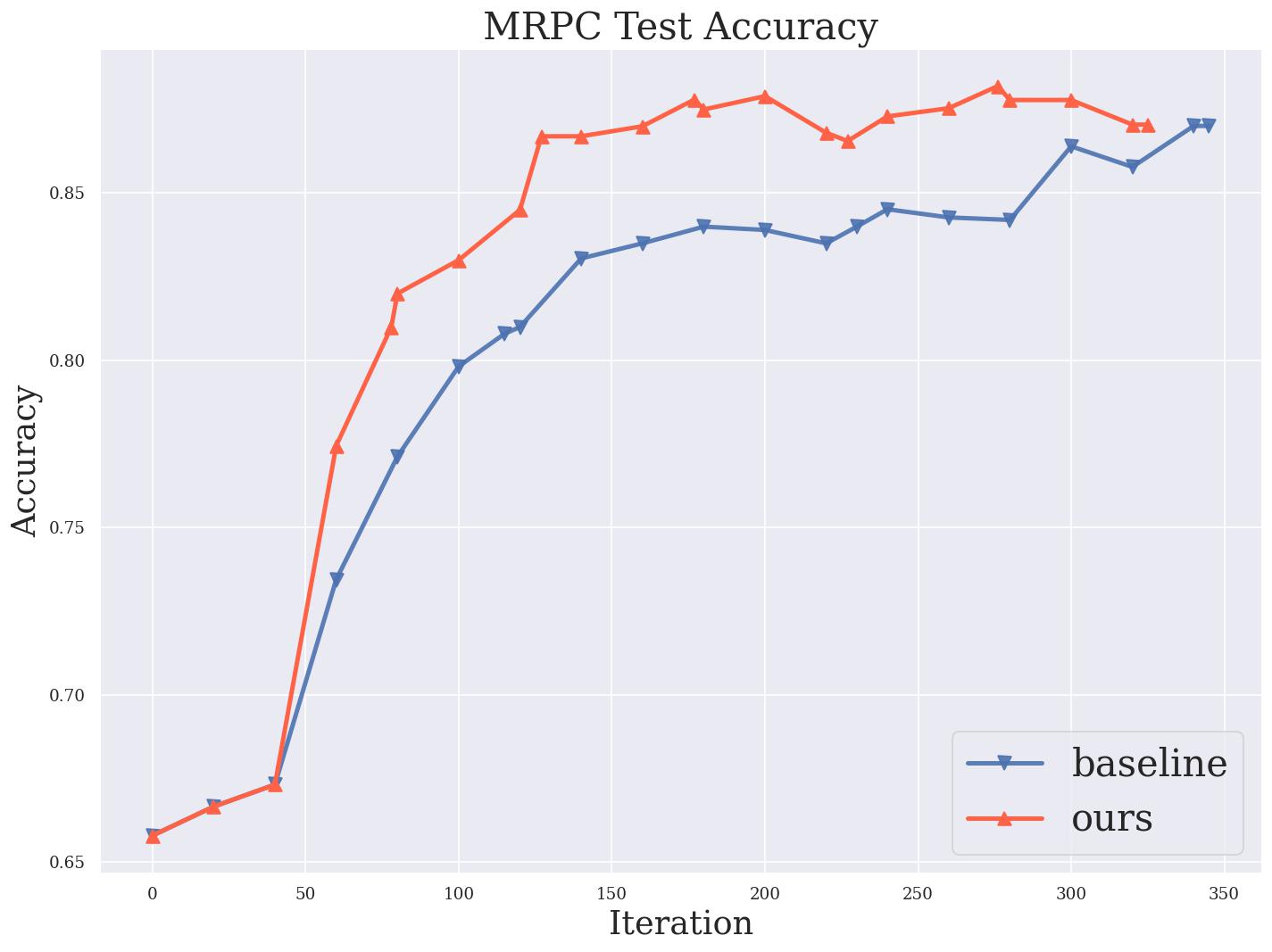}\par 
      \label{}
         \includegraphics[width=1.\linewidth]
     {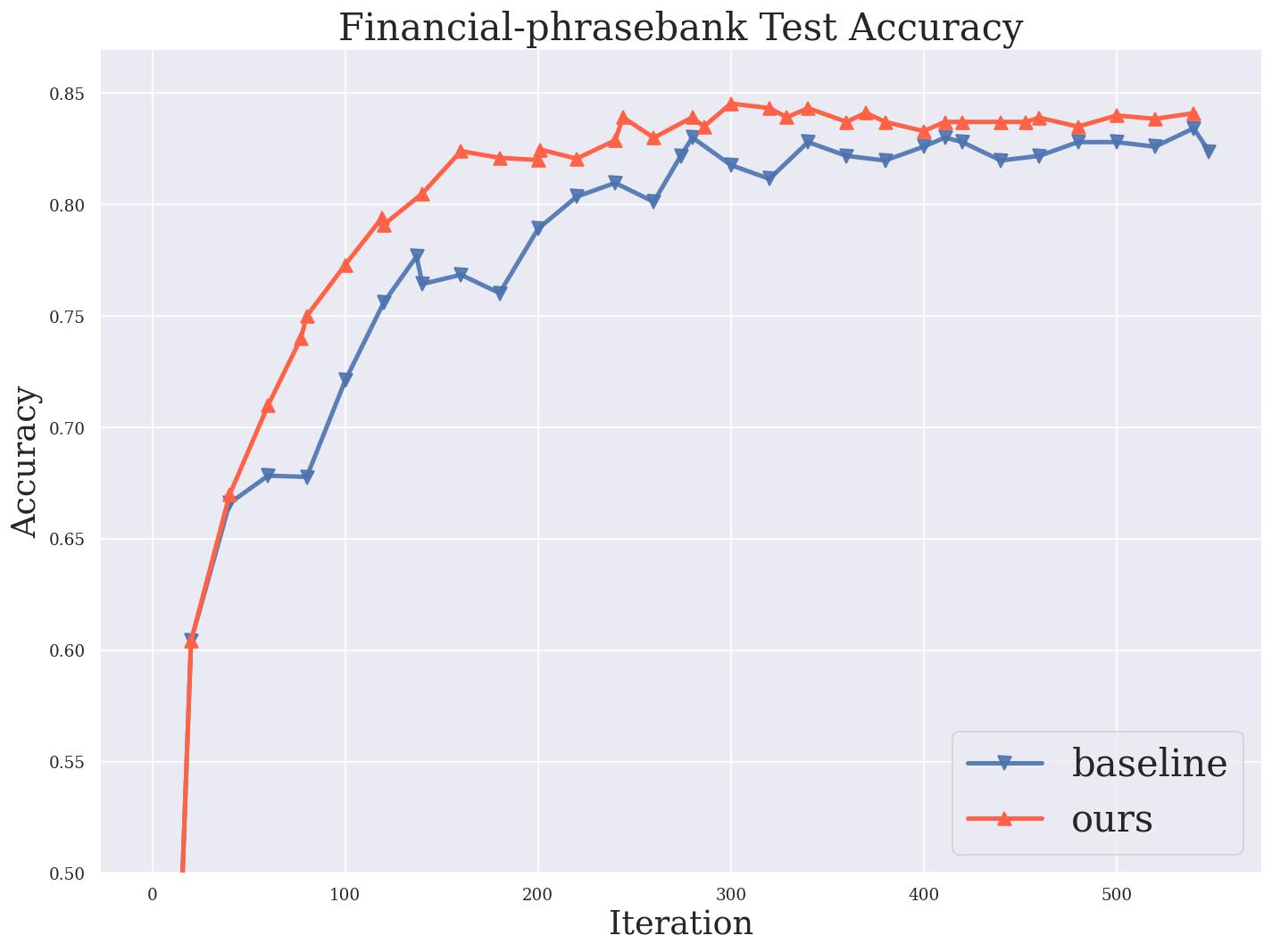}\par
        \label{} 
   \includegraphics[width=1.\linewidth]{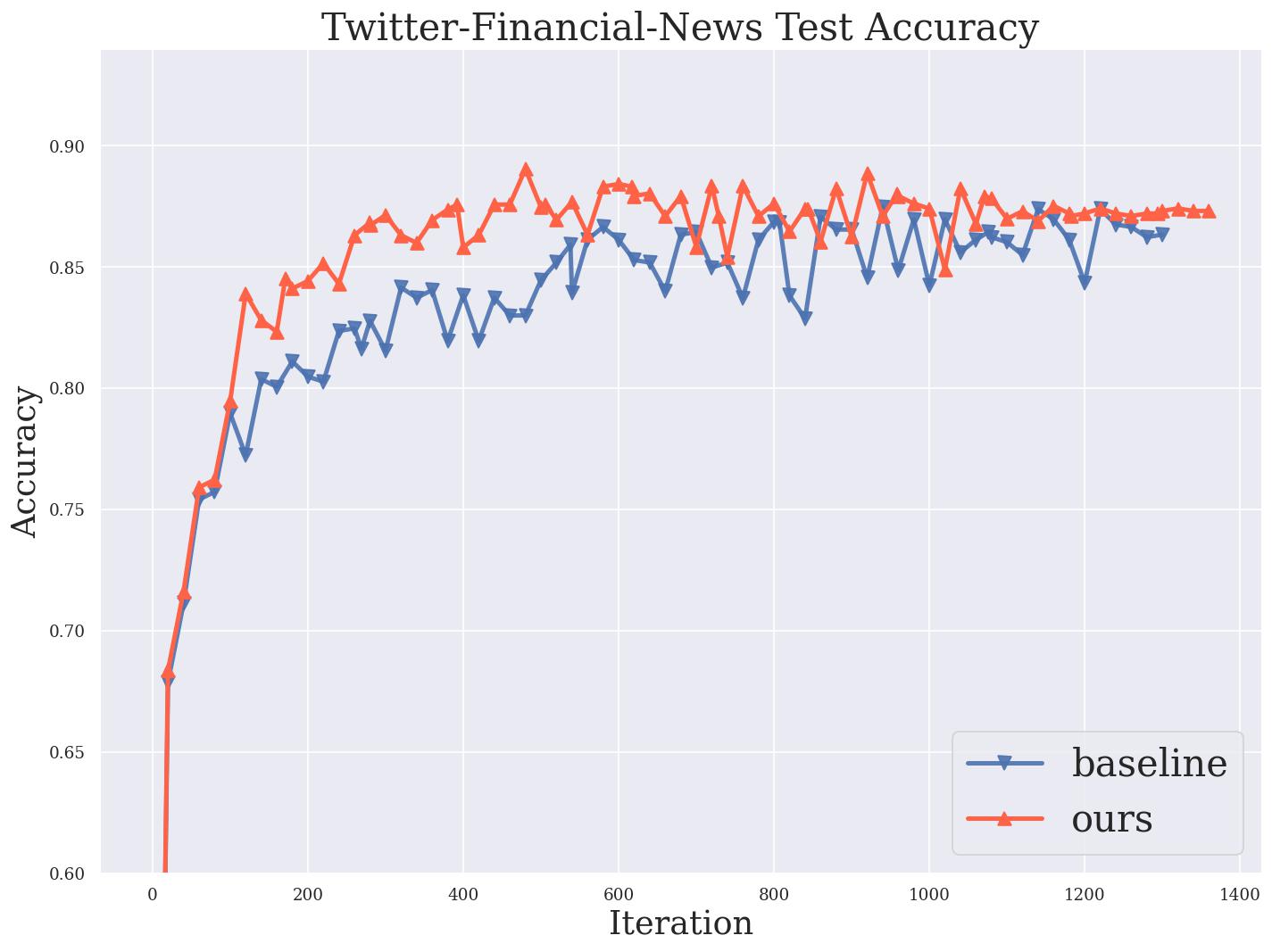}\par

      \label{}
          \includegraphics[width=1.\linewidth]{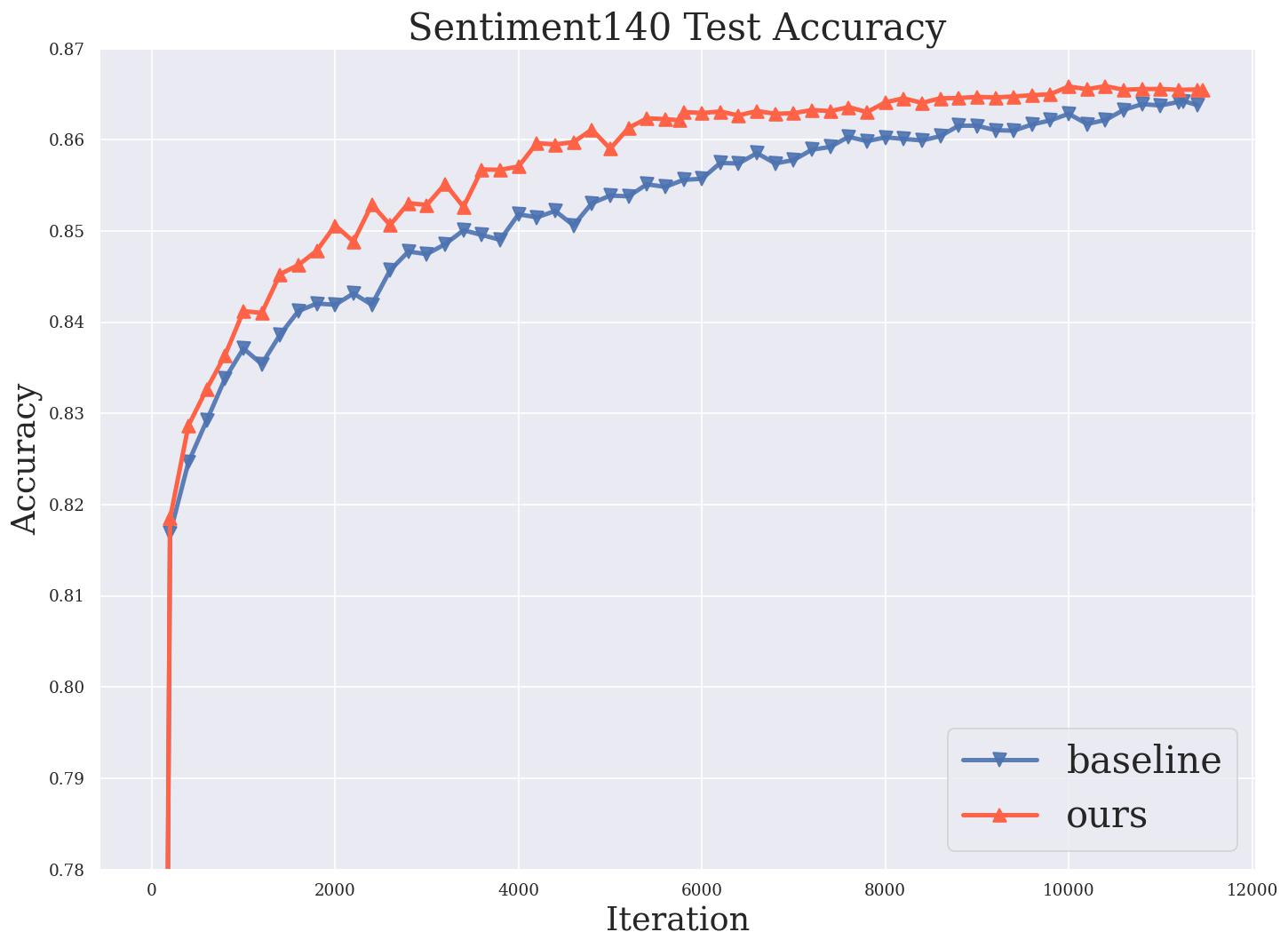}\par 
          \label{}

    \end{multicols}
    \begin{multicols}{4}
    \includegraphics[width=1.\linewidth]{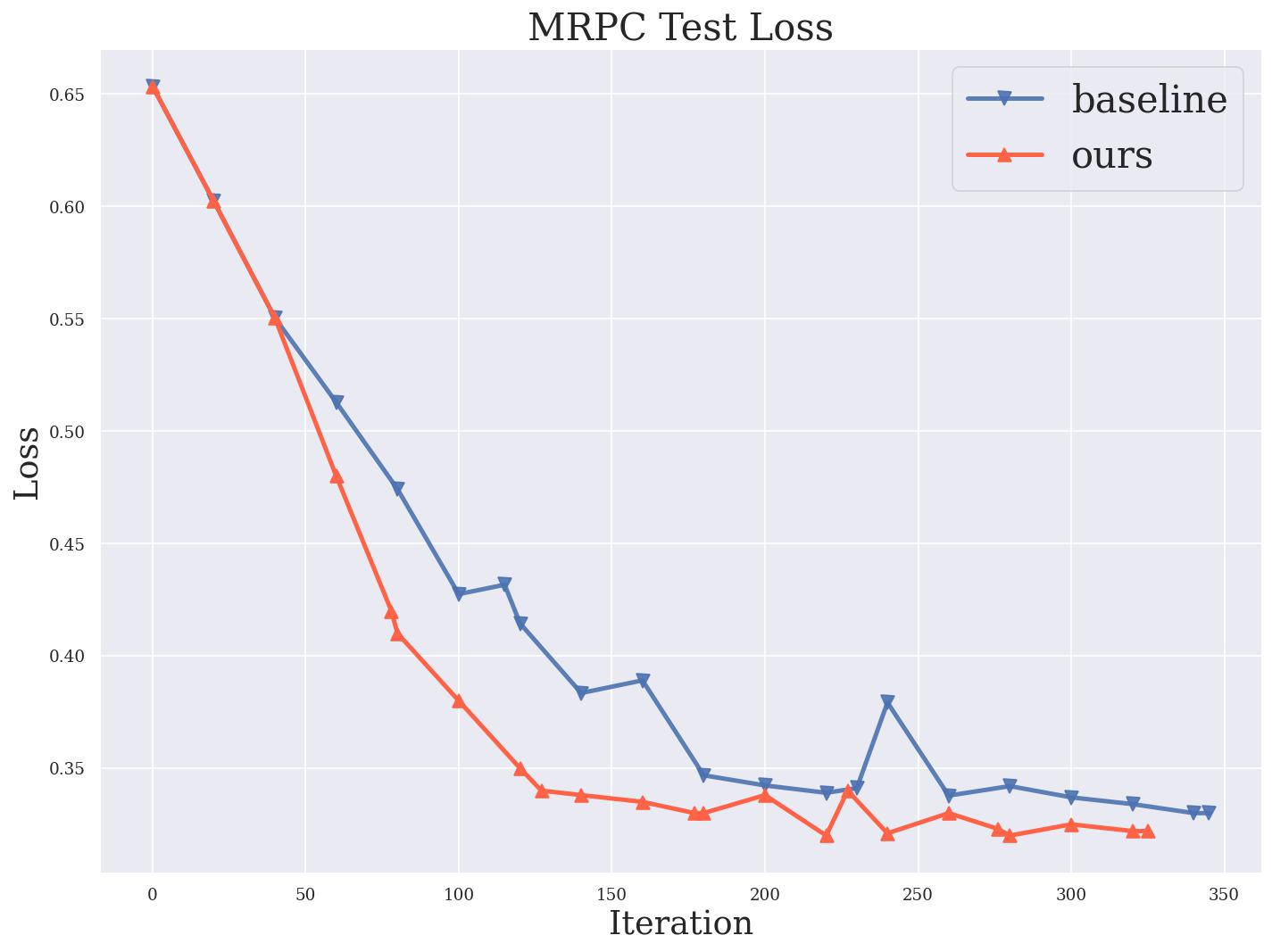}\par

      \label{}
    \includegraphics[width=1.\linewidth]{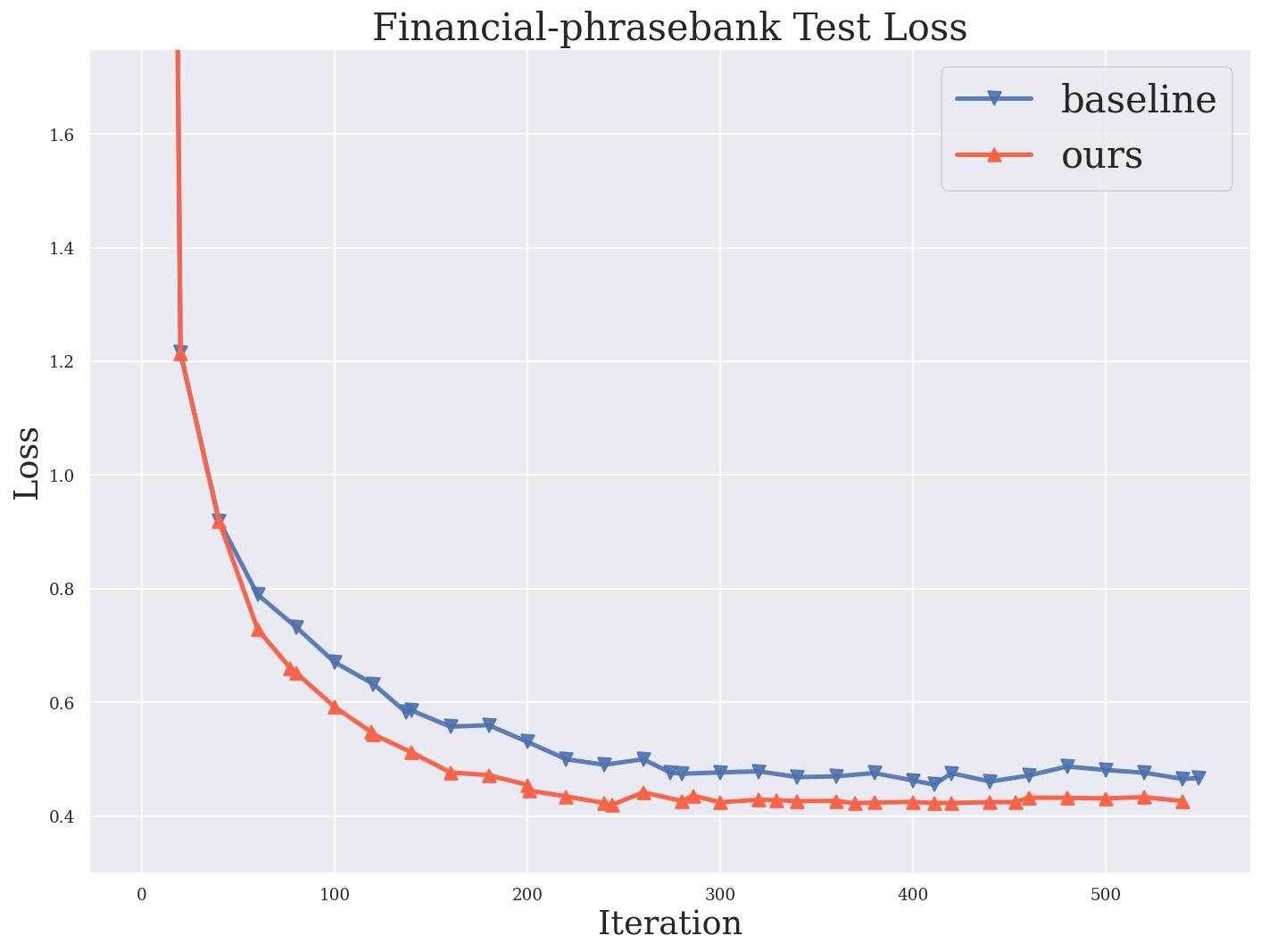}\par 

          \label{}
    \includegraphics[width=1.\linewidth]{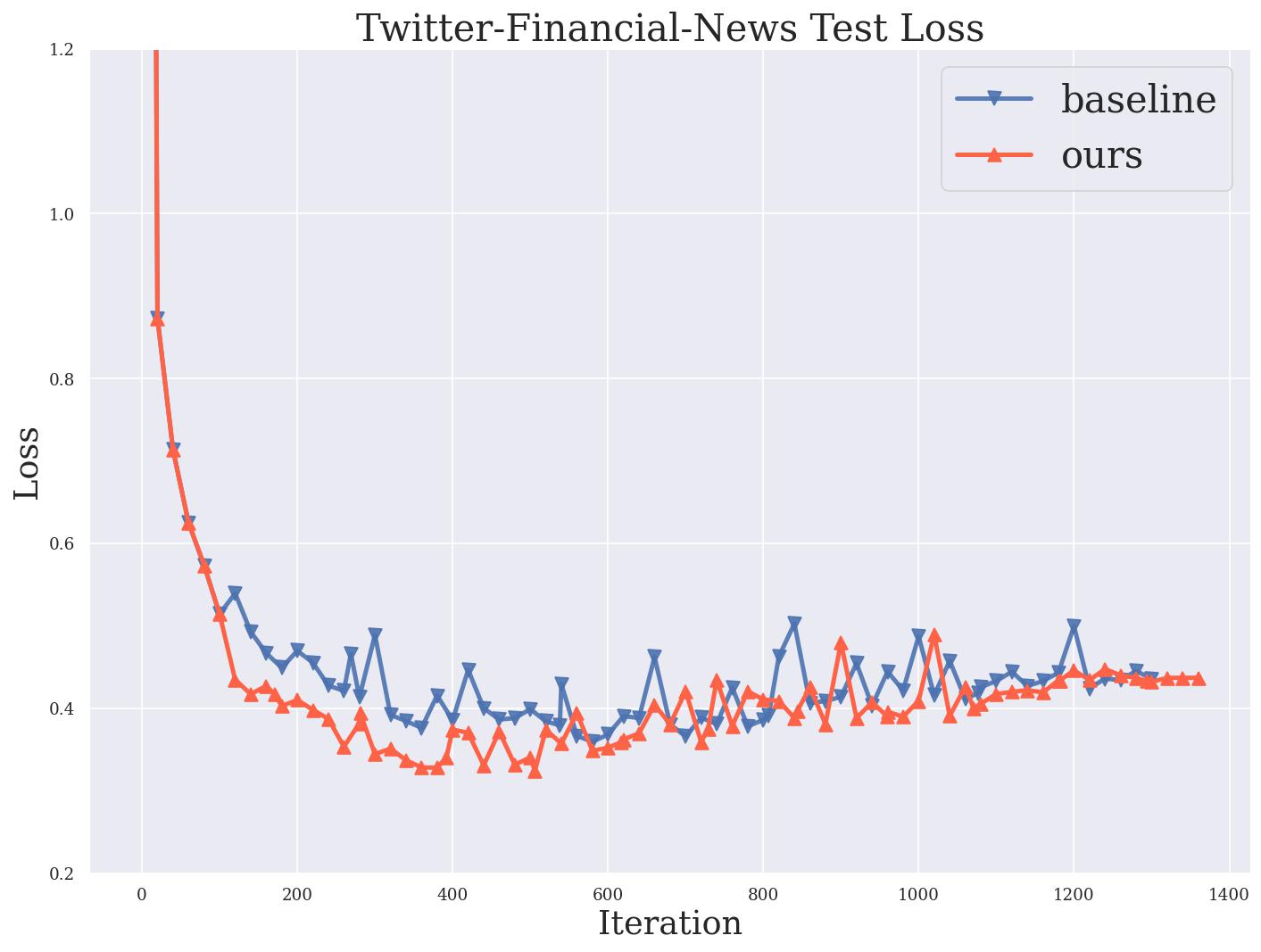}\par 
      \label{}
         \includegraphics[width=1.\linewidth]
         {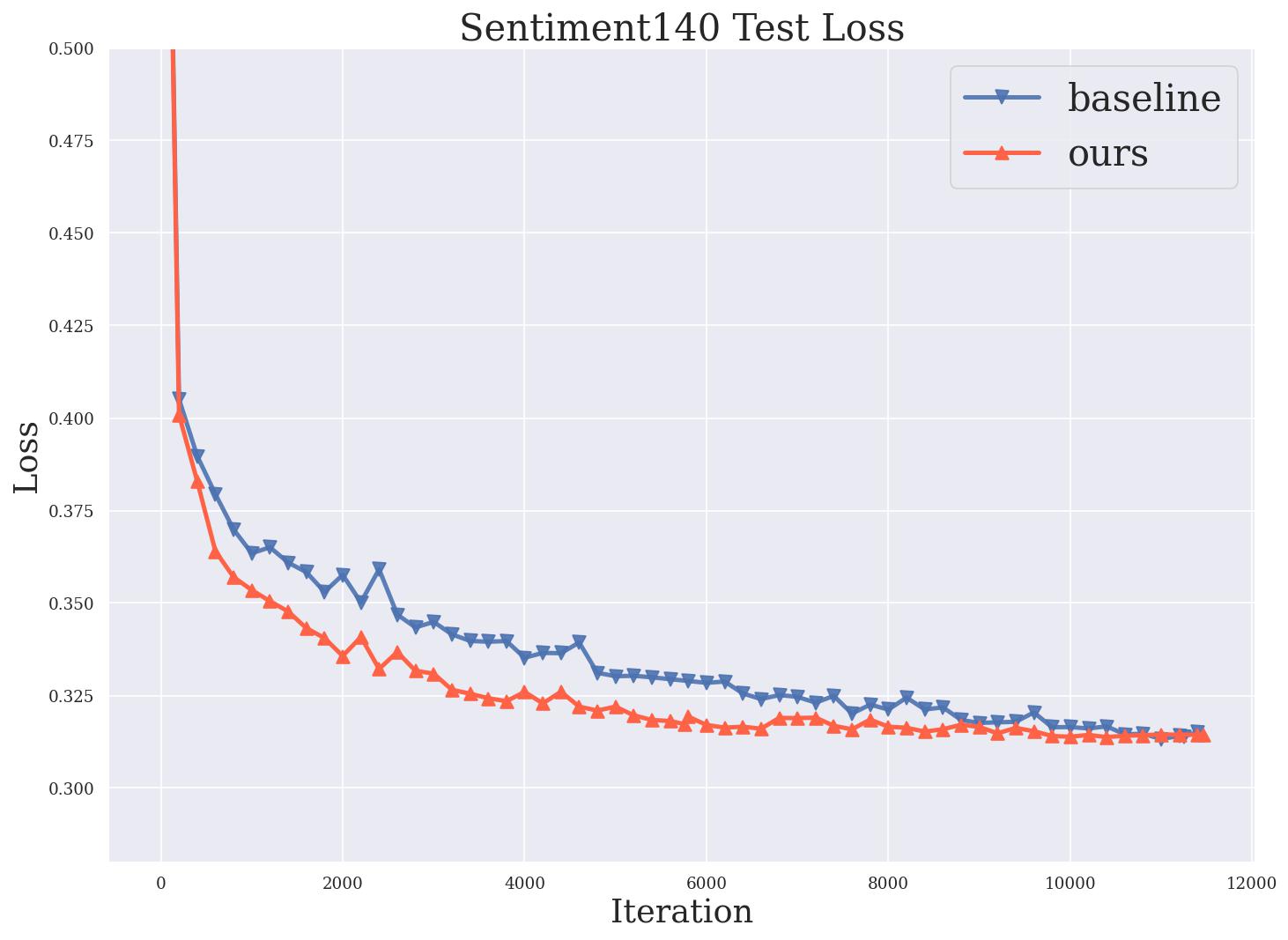}\par 
        \label{} 
    \end{multicols}
\end{center}
\caption{ Comparison of our proposal against the no-sampling baseline for four datasets in terms of loss and accuracy. {\bf Top Row:} represents test accuracy (\emph{y-axis}) vs. number of iterations (\emph{x-axis}) {\bf Bottom Row:} represents test loss (\emph{y-axis}) vs. number of iterations (\emph{x-axis}). The sampling ratio for MRPC, Financial-phrasebank, Twitter and Sentiment140 datasets are 30\%, 30\%, 40\% and 50\%, respectively.}
\label{fig:main_plot}
\end{figure*}

\begin{table*}[ht!]
\begin{center}
  \caption{Comparison of our proposal against the no-sampling baseline w.r.t the final accuracy, and the wall-clock time to reach the final accuracy of the baseline. Our method outperforms the baseline in terms of accuracy and wall-clock time for all datasets.}
\vskip -0.07in

\footnotesize

  \begin{tabular}{p{2.5cm}|p{0.5cm}p{2.4cm}|p{.5cm}p{2.4cm}|p{0.5cm}p{2.4cm}|p{0.5cm}p{2.4cm}}

    \toprule
   \hline
        \multicolumn{1}{c}{}
      &\multicolumn{2}{c}{MRPC}
    &\multicolumn{2}{c}{Financial-phrasebank}
    &\multicolumn{2}{c}{Twitter-financial-news}
    &\multicolumn{2}{c}{Sentiment140}
    \\
    
    Method  &Acc  &wall-clock time \newline to reach baseline Acc
    &Acc   &wall-clock time  \newline to reach baseline Acc
    &Acc  &wall-clock time  \newline to reach baseline Acc
    &Acc  &wall-clock time  \newline to reach baseline Acc
\\
   
    
    \\
    \hline 
   Baseline   &$0.87$ &baseline   &$0.834$ &baseline  &$0.881$ &baseline   &$0.864$ &baseline
\\
    \hline
     Ours    &$0.882$  &1.8x faster  &$0.842$  &1.9x faster  &$0.884$   &1.5x faster   &$0.866$  &1.3x faster
    %
    \\
           \hline
     
     
     
    \bottomrule

\end{tabular}
   \label{table:main_table}
\end{center}

\end{table*}

\section{Related Works}

Sampling and kernel estimation have recently been the focus of a large body of work.\\
\textbf{Kernel Estimators:} The problem of \textit{kernel density estimation} was well-studied in the era of kernelized linear models~\cite{vedaldi2012sparse, chen2012super} and has recently been the focus of intense research due to various reductions of other problems (such as near-neighbor search, graph construction, and kernel matrix multiplication and eigen-decomposition) to density estimation~\cite{coleman2020sub, backurs2019space, siminelakis2019rehashing, backurs2018efficient}.
The NWS bears some resemblance to the RACE kernel density estimator~\cite{coleman2020race, luo2018arrays}. However, there are a few crucial differences between this sketch and prior work. Existing work only considers the density estimation setting, a simpler problem setting where we are interested in approximating a kernel sum. To estimate the Nadaraya-Watson estimator, we must approximate the \textit{ratio} of kernel sums, which is a harder quantity to evaluate. A naive application of the techniques from prior work would result in unbounded variance and an undefined estimator, since the value from the denominator of Equation 1 can become zero. To address this problem, we re-derive the Chernoff bounds for the ratio of (dependent) kernel estimators, noting that the same analysis also produces guarantees for the other hash-based kernel sum approximators.\\
\textbf{Sampling:} There are many works which attempt to improve the speed of training a model by sampling inputs. Elements of the problem have been independently studied in the context of active learning, acceleration of SGD \cite{paul2021deep,johnson2018training}, heuristics to reduce the cost of training large networks, and coresets ~\cite{tukan2021coresets,mirzasoleiman2020coresets}. 
In this review, we distinguish between \textit{static} and \textit{dynamic} methods. Static methods are those that attempt to summarize the dataset without access to the model parameters, while dynamic methods permit access to the parameters as they change during training. Dynamic algorithms typically outperform their static counterparts in terms of sample complexity but incur a higher computational cost.\\

For the comprehensive review of the related work please refer to the supplementary material. 

\section{Conclusion}
We developed a novel sketch-based approximation of the Nadaraya-Watson estimator (NWS) that provably approximates the kernel regression model. Then, we proposed an efficient and dynamic data selection algorithm based on NWS to improve the training of neural networks. Our algorithm utilizes model parameters at each iteration to sample data points with higher loss values, without any explicit computation of loss. We benchmarked our algorithm against no-sampling baseline on four datasets and showed that our proposal outperforms the baseline in terms of accuracy and convergence time.


\nocite{langley00}

\bibliography{example_paper}
\bibliographystyle{icml2021}

\end{document}